\newtheorem{theorem}{Theorem}
\newcommand{\card}[1]{\lvert#1\rvert}
\DeclareMathOperator*{\argmax}{arg\,max}
\DeclareMathOperator*{\E}{\mathbb{E}}
\newcommand{\defeq}{\vcentcolon=}
\begin{document}
\title{Supervised Graph Contrastive Pretraining for Text Classification}
\titlenote{A condensed version of this paper has been accepted to ACM SAC'22.}

\renewcommand{\shorttitle}{GCPT}

\author{Samujjwal Ghosh}
\orcid{0000-0003-2859-7358}
\affiliation{%
 \institution{IIT Hyderabad}
}
\email{samujjwal86@gmail.com}

\author{Subhadeep Maji}
\affiliation{%
 \institution{Amazon}
}
\email{msubhade@amazon.com}

\author{Maunendra Sankar Desarkar}
\affiliation{%
 \institution{IIT Hyderabad}
}
\email{maunendra@cse.iith.ac.in}
%
%
%
%

\renewcommand{\shortauthors}{S Ghosh et al.}
\newcommand{\comr}[1]{\textcolor{red}{(Comment: #1)}}
\newcommand{\comb}[1]{\textcolor{blue}{#1}}

\begin{abstract}

    Contrastive pretraining techniques for text classification has been largely studied in an unsupervised setting. However, oftentimes labeled data from related tasks which share label semantics with current task is available. We hypothesize that using this labeled data effectively can lead to better generalization on current task. In this paper, we propose a novel way to effectively utilize labeled data from related tasks with a graph based supervised contrastive learning approach. We formulate a token-graph by extrapolating the supervised information from examples to tokens. Our formulation results in an embedding space where tokens with high/low probability of belonging to same class are near/further-away from one another. We also develop detailed theoretical insights which serve as a motivation for our method. In our experiments with $13$ datasets, we show our method outperforms pretraining schemes by $2.5\%$ and also example-level contrastive learning based formulation by $1.8\%$ on average. In addition, we show cross-domain effectiveness of our method in a zero-shot setting by $3.91\%$ on average. Lastly, we also demonstrate our method can be used as a noisy teacher in a knowledge distillation setting to significantly improve performance of transformer based models in low labeled data regime by $4.57\%$ on average.
\end{abstract}

%
%
\begin{CCSXML}
    <ccs2012>
    <concept>
    <concept_id>10003120.10003130.10003134.10003293</concept_id>
    <concept_desc>Human-centered computing~Social network analysis</concept_desc>
    <concept_significance>300</concept_significance>
    </concept>
    <concept>
    <concept_id>10010147.10010178.10010179.10003352</concept_id>
    <concept_desc>Computing methodologies~Information extraction</concept_desc>
    <concept_significance>500</concept_significance>
    </concept>
    <concept>
    <concept_id>10010147.10010257.10010258.10010259</concept_id>
    <concept_desc>Computing methodologies~Supervised learning</concept_desc>
    <concept_significance>300</concept_significance>
    </concept>
    <concept>
    <concept_id>10010147.10010257.10010293.10010294</concept_id>
    <concept_desc>Computing methodologies~Neural networks</concept_desc>
    <concept_significance>100</concept_significance>
    </concept>
    <concept>
    <concept_id>10010147.10010257.10010293.10010319</concept_id>
    <concept_desc>Computing methodologies~Learning latent representations</concept_desc>
    <concept_significance>500</concept_significance>
    </concept>
    </ccs2012>
\end{CCSXML}

\ccsdesc[300]{Human-centered computing~Social network analysis}
\ccsdesc[500]{Computing methodologies~Information extraction}
\ccsdesc[300]{Computing methodologies~Supervised learning}
\ccsdesc[100]{Computing methodologies~Neural networks}
\ccsdesc[500]{Computing methodologies~Learning latent representations}
\keywords{Contrastive Learning, Graph Neural Network, Text Classification, Text Representation, Zero-shot Classification, Disaster Response, Sentiment Analysis}

\maketitle

\section{Introduction}
Lack of labeled data is probably the most pervasive problem limiting the effectiveness of modern deep learning methods. In text classification, the widespread approach to get around this has been the classic pretrain-finetune paradigm where a model is learnt on a large corpus and then fine-tuned using a domain specific dataset. This pretrain-finetune paradigm is able to perform significantly better than just finetuning the model. Pretraining is generally performed using a large amount of unlabeled data. There are many situations where labeled data from \textit{related} datasets are available and can be utilized for pretraining. Our definition of \textit{related} is fairly broad; we consider two datasets to be related if their label space has the same semantics e.g. tweets related v/s unrelated to a natural disaster, positive v/s negative sentiment in a review, etc.

Due to the recent success of contrastive learning in improving generalization and theoretical guarantees in both Vision \cite{khosla2020supervised, chen2020simple, wang2021dense} and NLP~\cite{Wu2020CLEARCL,gunel2021supervised, Liao2021SentenceEU} domains, we explore supervised contrative pretraining in the context of binary text classification. We propose a graph contrastive pretraining technique to effectively utilize labeled data from related datasets when attempting to build a robust classifier for the current dataset. It learns an embedding space for unique tokens in the vocabulary by constructing a token-graph where the collocation of tokens defines the neighborhood.

Existing contrastive learning approaches fall on either unsupervised pretraining~\cite{Wu2020CLEARCL, Giorgi2021DeCLUTRDC} or supervised fine-tuning~\cite{huang2021tokenlevel, gunel2021supervised, sup_icassp}.
Unsupervised contrastive pretraining based techniques do not utilize supervised information even when available in related datasets. On the other hand, supervised fine-tuning based approaches uses contrastive loss as an additional objective only in addition to cross-entropy loss. We propose supervised contrastive pretraining scheme which utilizes supervised information from related datasets with contrastive formulation. We extrapolate the label information on examples to tokens as their conditional probability $\text{Pr}(\text{label}\mid \text{token})$ and use this label information in a supervised contrastive learning formulation to learn a token embedding space. We contrast between the highly confident tokens within a class with similar tokens from other class. Our contrastive formulation results in an embedding space where tokens belonging to different classes are well separated, a desirable characteristic of feature space in a classification setting. We refer our proposed technique as \textbf{G}raph \textbf{C}ontrastive \textbf{P}re-training for \textbf{T}ext (GCPT). We also develop theoretical guarantees around effectiveness of GCPT by building upon~\cite{saunshi2019theoretical}.

We conduct several experiments across $5$ different sentiment classification and $8$ disaster management datasets in binary classification setting to gain a robust understanding of our method's effectiveness. We find that our method outperforms word embedding techniques trained on unlabeled domain-specific datasets (e.g. disaster related tweets) by $2.5\%$ on average. We also show that our method is an effective pretraining scheme by combining it with several different baseline models from disaster and sentiment classification domain; in each case it improves the performance of underlying model, on average, by $1.87\%$. We also compare against example-level contrastive learning formulation and show our token level contrastive formulation is better by $1.8\%$ on average. Our method also achieves robust zero-shot performance wherein no labeled data from current dataset is available (a very realistic scenario in disaster management). Once again it outperforms several baselines by $3.91\%$ on average. Finally, we also improve upon BERT in a knowledge distillation setup wherein our model works as a noisy teacher for a BERT student. In this case, we improve upon BERT's performance by $4.57\%$ on average.
\section{Related Work} \label{section:related}
Most existing works attempting to utilize contrastive learning for text classification can be categorized into two paradigms  a) unsupervised pretraining or b) supervised fine-tuning.
Our work attempts to combine both approaches and is motivated by recent success of contrastive pretraining for language modelling and sentence representations leading to good generalizability across wide range of applications~\cite{gunel2021supervised,Liao2021SentenceEU,Wu2020CLEARCL,huang2021tokenlevel}.

\textbf{Unsupervised Pretraining:} Pretraining based approaches has been largely explored in an unsupervised setting only. However, oftentimes supervised corpora is available for other \textit{related} datasets, e.g. labeled sentiment analysis dataset for ``Book Reviews'' when attempting to learn a sentiment analysis classifier for ``Kitchen Items''.
\cite{Giorgi2021DeCLUTRDC,fang2020cert} propose self-supervised contrastive learning methods for language models using data augmentation techniques such as back-translation and sampling spans from same or different documents as positives and negatives respectively. \cite{Wu2020CLEARCL} extends the idea to learn sentence level representations by proposing a number of different sentence level augmentation strategies. However, unsupervised or self-supervised contrastive learning based methods have a natural difficulty when it comes to NLP, this is because unlike images changing a few words or even their ordering in a sentence can significantly alter the semantics.

\textbf{Supervised Fine-tuning:} On the other hand,
\cite{gunel2021supervised,huang2021tokenlevel,Liao2021SentenceEU, sup_icassp} takes a different approach by using the labeled information to learn a feature space where examples belonging to different classes are further away than examples from same class. They achieve this by typically adding a contrastive loss component to the standard cross-entropy loss. These methods have shown to be robust to label noise and significantly improve upon large language models such as BERT.

Our work combines above mentioned approaches by utilizing \emph{supervised} information during \emph{pretraining} using contrastive learning. However, there are two key differences in our formulation. First, we attempt to learn a token-level (e.g. word) embedding space while most existing methods are example-level (e.g. sentences). In our experiments, we show our token level formulation is better in comparison to an example level contrastive learning formulation. Second, our method is naturally suited to a cross-domain setting because of our token level pretraining setup. This allows us to use labeled data from related datasets effectively, while existing supervised contrastive learning methods are mostly restricted to fine-tuning the contrastive loss jointly with cross-entropy loss on the current labeled dataset alone. To establish cross-domain applicability of our method, we show zero-shot text classification performance i.e no labeled data being available for the current dataset.
\section{Approach} \label{section:approach}
At a very high level, our approach is a pretraining technique for learning a token embedding space with the end objective of improving text classification performance on datasets with limited labeled data.
We assume a shared label space between the previous and current datasets. This is reasonable assumption as the type of information to collect generally stays consistent across disasters.
Using labeled pretraining corpora, from previous datasets, in the form of $\{(e,y)\}$ where $e$ is an example, and $y$ is it's class label, our method contrasts between representative candidate tokens from each class.
Candidates are found by extrapolating the label information available at example-level to tokens as the conditional probability $\text{Pr}(\text{label} \mid \text{token})$.
%
This conditional label information is then used in a contrastive formulation to learn an embedding space where tokens from different classes are well separated. We also present novel theoretical results building upon~\cite{saunshi2019theoretical}, the theoretical analysis serves as motivation for our algorithm.

We refer to the current dataset as $\mathbf{O}$ and the union of related datasets as $\mathbf{P}$. The elements in $\mathbf{O}$ (and $\mathbf{P}$) are pairs $(\mathbf{x}, y)$ where $\mathbf{x}$ is an example (e.g tweet) and $y\in\mathcal{Y}$ is the class label, $\card{\mathcal{Y}}=C$ is the number of classes. We assume label space $\mathcal{Y}$ to be the same across current and related datasets. Also, $|\mathbf{O}|$ is much smaller than $|\mathbf{P}|$, indicative of our limited labeled data setup.
\subsection{Token Label Conditional} \label{ssec:token_extrapolation}
To extrapolate the label information from examples to tokens, we introduce a token graph $\mathbf{G}=(V,E)$. Specifically, set of all unique tokens with minimum frequency ($\geq 5$) from $\mathbf{p} \in \mathbf{P}$ are vertices ($V$) in the graph. Edges ($E$) are defined by pointwise mutual information (PMI) between two tokens over all $\mathbf{p} \in \mathbf{P}$. Given a label $c$, let $\mathbf{P}^c \subset V$ be a set of tokens defined with,
\begin{align} \label{eq:theta_estimate}
	\theta^{j}_{x} & =  \frac{n^{j}(x)}{\sum\limits_{c=1}^{C} n^{c}(x)};
	~~~~\text{s.t.} \quad \theta^{c}_{x} & \geq \tau \geq \frac{1}{C}; ~~~~
	c & = \argmax_{j \in \{1,\cdots,C\}} \theta^{j}_{x}
\end{align}
where $n^{c}(x)$ is the frequency of token $x$ in examples $\mathbf{p} \in \mathbf{P}$ such that their label is $c$. Intuitively, $x\in \mathbf{P}^c$ has the highest conditional probability of belonging to label $c$ (ties broken arbitrarily) amongst all labels.
%
\begin{figure*}
\includegraphics[width = 0.999\linewidth]{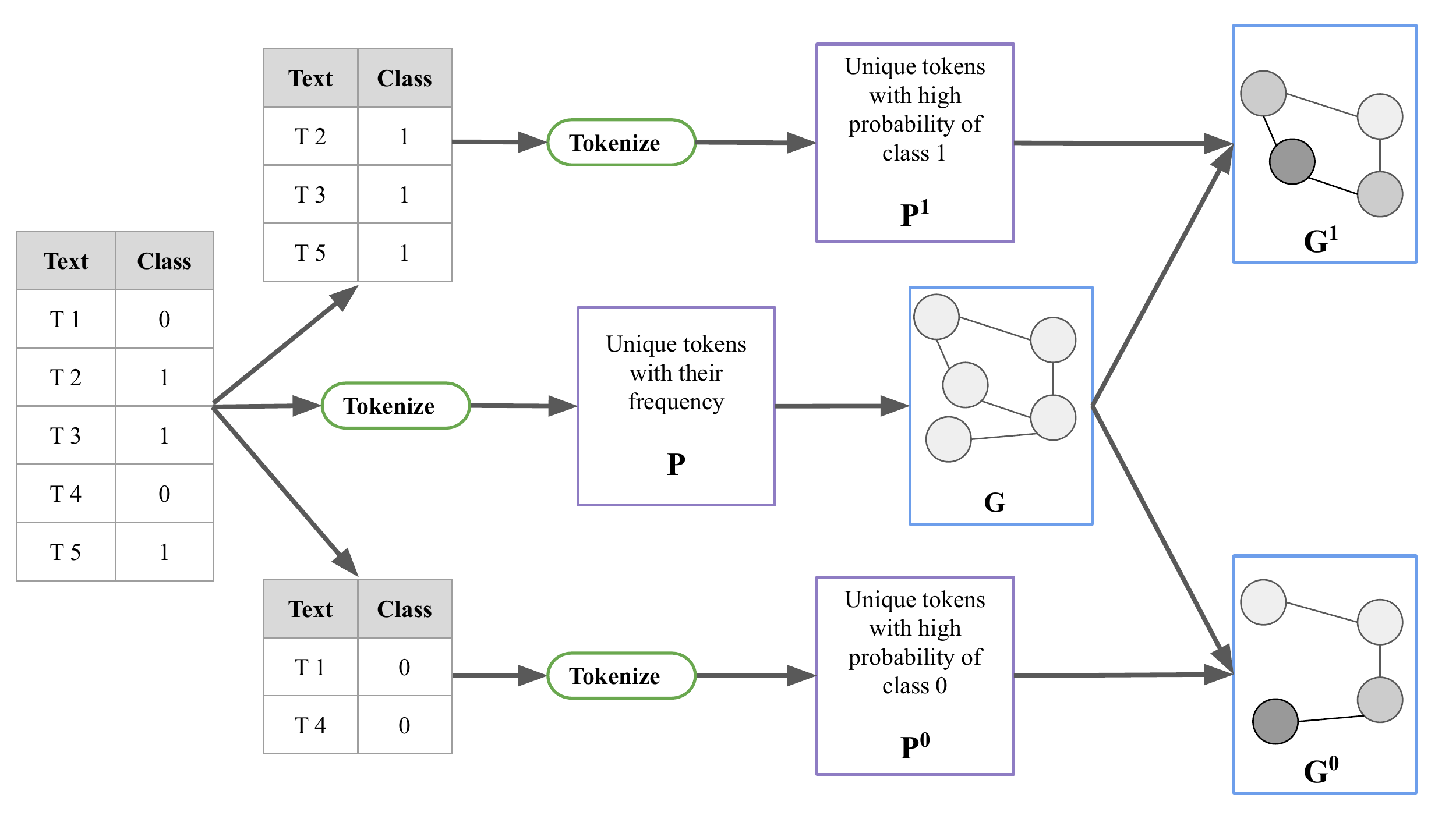}\\
\caption{High level illustration of our token-level conditional and graph construction in binary setting.}
\label{fig:dataflow}
\end{figure*}
The normalized frequency $\theta^{c}_{x}$ determines the confidence of label-token association, and only tokens with high confidence (i.e $\geq \tau$) have a conditional label. 
Considering high confidence within a class ensures exclusion of insignificant tokens (e.g. stopwords) as candidates as they occurs uniformly across classes. Note that these tokens may be present in the graph but not considered as candidate. On the other hand, to avoid very less frequent tokens as candidates, we only consider tokens with a minimum frequency when constructing the graph $\mathbf{G}$. 
Using sets $\mathbf{P}^c$ we define the induced subgraphs for individual labels $c \in \mathcal{Y}$ as $\mathbf{G}^c=(\mathbf{P}^c,E^{c},W)$.
Note that sets $\mathbf{P}^c$ are mutually exclusive, and not all tokens $x \in V$ belong to $\mathbf{P}^c$ (there can be only one $c$ for a particular token $x$).
To learn a token (node) representation for $x \in V$ we apply a $2$-layer GCN~\citep{gcn} on $\mathbf{G}$.
Specifically,
\begin{align} \label{eq:gcn_node_rep}
	H^{(l+1)} &= \sigma \left( D^{-\frac{1}{2}} \hat{A} D^{-\frac{1}{2}} H^{(l)}W^{(l)}\right)
\end{align}
with $H^{(0)} = \text{Glove}(\mathbf{.})$. Here, $D$ is the diagonal degree matrix of $\mathbf{G}$ and $\hat{A} = A + I$, where $A$ is the weighted adjacency matrix of $\mathbf{G}$ and $l$ is the layer with values $l\in\{0,1\}$ in this case.
GCN based node representations capture co-occurrence patterns between tokens from $\mathbf{P}$ because of the underlying message-passing over neighborhoods. In addition, because $\mathbf{G}$ can be seen as a union of induced subgraphs $\mathbf{G}^c$, the message-passing with GCN layers over $\mathbf{G}$ smoothens the node representations across these subgraphs and acts as a form of regularization to our contrastive formulation. This regularization is helpful because our extrapolation of label information to tokens is based on observed $\mathbf{P}$ and therefore empirical.
%
%
\subsection{Graph Contrastive Pretraining} \label{ssec:contrastive_pretrain}
Let $\cup_{c=1}^{C} \mathbf{G}^c$ be the union of the induced subgraphs for individual labels $c\in\mathbf{Y}$. For node (token) $n\in \mathbf{G}^c$ ensuring $n$ is closer to its neighbors from $\mathbf{G}^{c}$ and further away from nodes in $\mathbf{G}^{\lnot c}$ would result in tokens belonging to the same (or different) class to be close (or faraway) from one other in the embedding space. To this effect, we propose a graph contrastive learning method over nodes in $\cup_{c=1}^{C} \mathbf{G}^c$. Specifically, for a candidate token $n\in\mathbf{G}^c$, we sample its positive and negative neighbors as follows:
\begin{itemize}
	\item \textbf{Positive}: The positives are sampled from neighborhood of node $n$ in the induced subgraph $\mathbf{G}^c$. We draw $S$ positive samples with replacement from this neighborhood (Nbh) i.e $M^{+} \sim_{iid} \text{Nbh}(n)$, with $|M^{+}|=S$.
	\item \textbf{Negative}: The negatives are sampled uniformly at random from induced subgraphs other than $\mathbf{G}^c$ i.e $\mathbf{G}^{\lnot c}$. We draw $S$ negatives samples with replacement from $\mathbf{G}^{\lnot c}$ i.e $M^{-} \sim_{iid} \mathcal{U}(P^{\lnot c})$, with $|M^{-}|=S$. $\mathcal{U}$ represents uniform distribution.
\end{itemize}

Figure~\ref{fig:dataflow} provides an illustration of our sampling process on $\mathbf{G}^c$ and $\mathbf{G}^{\lnot c}$.
Using candidate token $n$, the positive samples $M^{+}=m^{+}_{1..s}$  and negative samples $M^{-}=m^{-}_{1..s}$ we define a contrastive loss formulation with equation~\eqref{eq:contrastive_loss}. The proposed loss is a variant of~\citep{supervised-contrast} where positive samples are drawn from graph neighborhoods. Formally, the loss on node $n$ is defined as $l_n$ as follows,
\begin{align}  \label{eq:contrastive_loss}
	& f(m,n)  = \text{exp}\left ( \frac{z_m \cdot z_n}{\gamma}\right) \\ \nonumber
	& l_n  = -\frac{1}{S} \sum_{m^{+} \in M^+}
	\log \left( \frac{f(m^{+}, n)} {\sum\limits_{m \in M^+ \cup M^{-}} f(m, n)} \right)
\end{align}
Where $z_n = H^{(2)}(n)$ is the GCN representation of node $n$ and $\gamma$ is a temperature parameter for softmax. Intuitively, the loss tries to bring the representation of nodes from $\mathbf{G}^c$ close to each other in comparison to nodes from $\mathbf{G}^{\lnot c}$. This formulation results in greater feature separation of different classes in the embedding space. The overall loss is obtained by drawing a random sample of $N$ such candidate tokens by first sampling a class and then sampling a node from the set of nodes belonging to that class.
\begin{align} \label{eq:overall_loss}
	\mathcal{L} & = \frac{1}{N} \left( \sum_{c \sim \mathcal{U}(1..C)} \sum_{n \sim \mathcal{U}(P^c)} l_n \right)
\end{align}
While equation~\eqref{eq:overall_loss} is defined only on nodes in $\cup_{c} \mathbf{G}^c$, representation for nodes in $\mathbf{G}\setminus \cup_{c} \mathbf{G}^c$ also get modified during pretraining because GCN layers are trained end-to-end with the contrastive loss $\mathcal{L}$. This results in better overall separation in the embedding space for all tokens in the vocabulary.

The success of the unsupervised contrastive learning methods rests on the reliability of positive (and negative) samples belonging to the same (and different) class as the candidate. This is a difficulty with many existing unsupervised contrastive learning methods~\cite{saunshi2019theoretical}. Although our contrastive formulation is not unsupervised, it relies on extrapolation of example level supervision to tokens and potentially suffers from the label noise. We address this issue by applying our contrastive formulation to tokens with high confidence label association (i.e. $\cup_{c=1}^{C} \mathbf{G}^c$), resulting in better guarantees on positive (and negative) samples belonging to the same (and different) class as the candidate. In \S~\ref{ssec:theory}, we build upon the theoretical analysis in~\citep{saunshi2019theoretical} to develop deeper insights on token level extrapolation of supervision for contrastive losses.
%
\subsection{Theoretical Insights} \label{ssec:theory}
Our theoretical results are for binary classification problems, i.e $C=2$, also we restrict ourselves to only single positive and negative sample for each candidate $x$ (i.e $S=1$). The extension of these results for multi-class problems with blocks of $S$ positive and negative samples is left as future work. In this setting, the contrastive loss in equation~\eqref{eq:contrastive_loss} is,
\begin{align*}
	l_n & = -\log\left(
	\frac{\exp(\frac{z_{m^{+}} \cdot z_n}{\gamma})}
	{\exp(\frac{z_{m^{+}} \cdot z_n}{\gamma}) + \exp(\frac{z_{m^{-}} \cdot z_n}{\gamma})} \right) \\ \nonumber
	& = \log \left(1 + \exp \left (\frac{z_n \cdot (z_{m^{-}} - z_{m^{+}})}{\gamma}\right ) \right ) \\ \nonumber
	& \defeq h(z_n \cdot (z_{m^{-}} - z_{m^{+}}))
\end{align*}
For ease of exposition, with some abuse of notation we will use $h(z_n \cdot (z_{m^{-}} - z_{m^{+}}))$ and $h(z_n, z_{m^{+}}, z_{m^{-}})$ interchangeably.

Our theoretical analysis is not restricted to the exact sampling scheme discussed in~\S~\ref{ssec:contrastive_pretrain} and is more broadly applicable. The only assumption is the label conditional on every token $x$ is modeled as Bernoulli distribution parameterized by $\theta_{x}$. Note that, we do not make further assumption on estimation of $\theta_x$ (such as equation~\eqref{eq:theta_estimate}).

We now describe a generative process from which our contrastive formulation follows. First sample a token $x$ from data distribution $p_{\text{data}}$, then sample positive and negative classes $c^{+}\sim \text{Ber}(\theta_x), c^{-} \sim \text{Ber}(1-\theta_{x})$ according to Bernoulli distributions dependent on the conditional probability for token $x$. Then sample positive and negative examples for contrastive learning w.r.t token $x$ from distribution $D_{c^{+}}$ and $D_{c^{-}}$. Note that the sampling scheme discussed in~\S~\ref{ssec:contrastive_pretrain} is a special case of the above because we don't sample $c^{+},c^{-}$ explicitly. Instead, every token already has the most likely label associated with it (in accordance with equation~\eqref{eq:theta_estimate}). The overall contrastive loss can then be expressed as,
\begin{align} \label{eq:unsupervised_loss}
	L(z) =  \E_{\substack{x \sim p_{\text{data}} \\
			c^{+} \sim  \text{Ber}(\theta_{x}) \\ c^{-} \sim \text{Ber}(1-\theta_{x})}}
	\left[ \E_{\substack{m^{+} \sim D_{c^{+} } \\ m^{-} \sim D_{c^{-} }}} \big[ h(z_{x}, z_{m^{+}}, z_{m^{-}}) \big] \right]
\end{align}
In equation~\eqref{eq:unsupervised_loss}, $D_{c^{+}}, D_{c^{-}}$ are data distributions of classes $c^{+}, c^{-}$ respectively. For ease of exposition, we will drop the $\text{Ber}(.)$ notation and instead use $ c^{+} \sim  \theta_{x}, c^{-} \sim (1-\theta_{x})$. Following~\citep{saunshi2019theoretical}, we split $L(z)$ into two components  based on $c^{+} = c^{-}$, the ``class-collision loss'' (CCL) and ``ideal contrastive loss'' (ICL).
%
		\begin{align*}
			L^{\text{CCL}}(z) & =  \E_{\substack{x \sim p_{\text{data}} \\
					c^{+} \sim  \theta_{x} \\ c^{-} \sim (1-\theta_{x})}}
			\left[ \E_{\substack{m^{+} \sim D_{c^{+} } \\ m^{-} \sim D_{c^{-} }}} \big[ h(z_{x}, z_{m^{+}}, z_{m^{-}})
			\mid c^{+} = c^{-} \big]  \right] \\ \nonumber
			L^{\text{ICL}}(z) & = \E_{\substack{x \sim p_{\text{data}} \\
					c^{+} \sim  \theta_{x} \\ c^{-} \sim (1-\theta_{x})}}
			\left[ \E_{\substack{m^{+} \sim D_{c^{+} } \\ m^{-} \sim D_{c^{-} }}} \big[ h(z_{x}, z_{m^{+}}, z_{m^{-}})
			\mid c^{+} \neq c^{-} \big] \right]
		\end{align*}
%
The ideal behaviour of a learning algorithm minimizing contrastive loss $L(z)$ should be to minimize both CCL ($L^{\text{CCL}}(z)$) as well as ICL ($L^{\text{ICL}}(z)$) loss components. However, the behaviour of the two loss components is very different from one another. The CCL component comes from the penalty a learning algorithm pays from class specific positive and negative sampling errors while ICL component helps in learning contrastive feature representations $z$. The following result allows us to characterize the relation of label-token association parameters; $\theta_x$ with CCL and ICL components of $L(z)$.
\begin{theorem} \label{thm:upper_bound}
	The contrastive loss $L(z)$ is upper bounded by $c L^{\text{CCL}}(z) + L^{\text{ICL}}(z)$, where $c=2\tau(1-\tau) \in[0.5,1]$ and $\tau \geq \frac{1}{2}$.
\end{theorem}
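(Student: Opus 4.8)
The plan is to split $L(z)$ exactly along the class-collision event $\{c^{+}=c^{-}\}$ and then bound, uniformly over tokens $x$, the probability weight that this event carries, using the confidence threshold $\tau$.

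First I would apply the law of total expectation to the inner draw of $c^{+},c^{-}$ at a fixed token $x$. Since $c^{+}\sim\text{Ber}(\theta_{x})$ and $c^{-}\sim\text{Ber}(1-\theta_{x})$ are drawn independently, the collision probability is $\Pr(c^{+}=c^{-}\mid x)=\theta_{x}(1-\theta_{x})+(1-\theta_{x})\theta_{x}=2\theta_{x}(1-\theta_{x})$. Writing $g_{=}(x)$ and $g_{\neq}(x)$ for the expectation of $h(z_{x},z_{m^{+}},z_{m^{-}})$ conditioned on $\{c^{+}=c^{-}\}$ and on $\{c^{+}\neq c^{-}\}$ (so that, reading the conditioning in the definitions pointwise in $x$, $L^{\text{CCL}}(z)=\E_{x}[g_{=}(x)]$ and $L^{\text{ICL}}(z)=\E_{x}[g_{\neq}(x)]$), I get the identity
\[
L(z)=\E_{x}\big[\,2\theta_{x}(1-\theta_{x})\,g_{=}(x)+\big(1-2\theta_{x}(1-\theta_{x})\big)\,g_{\neq}(x)\,\big].
\]

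Two elementary facts finish the proof. (i) The link function $h(v)=\log\!\big(1+\exp(v/\gamma)\big)$ is nonnegative, so $g_{=}(x)\ge 0$ and $g_{\neq}(x)\ge 0$ for every $x$; since the weight $1-2\theta_{x}(1-\theta_{x})$ lies in $[0,1]$, the second term is at most $\E_{x}[g_{\neq}(x)]=L^{\text{ICL}}(z)$. (ii) By the candidate construction of equation~\eqref{eq:theta_estimate} we have $\max(\theta_{x},1-\theta_{x})\ge\tau\ge\tfrac12$, i.e.\ $\theta_{x}$ sits on the branch of the downward parabola $t\mapsto 2t(1-t)$ that is monotone as $t$ moves away from $\tfrac12$; its maximum over that branch is attained at $\tau$, so $2\theta_{x}(1-\theta_{x})\le 2\tau(1-\tau)=c$ for all $x$. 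Combining this with $g_{=}(x)\ge 0$ gives $\E_{x}[2\theta_{x}(1-\theta_{x})\,g_{=}(x)]\le c\,L^{\text{CCL}}(z)$, and adding the two bounds yields $L(z)\le c\,L^{\text{CCL}}(z)+L^{\text{ICL}}(z)$.

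The calculation is short, so the only delicate points are bookkeeping ones: (a) making the CCL/ICL split an exact identity with the correct weights $2\theta_{x}(1-\theta_{x})$ and $1-2\theta_{x}(1-\theta_{x})$, being careful about how the conditioning on $\{c^{+}=c^{-}\}$ interacts with the outer expectation over $x\sim p_{\text{data}}$ (if instead that conditioning is read as reweighting $x$, the identity becomes $L(z)=\E_{x}[2\theta_{x}(1-\theta_{x})]\,L^{\text{CCL}}(z)+\E_{x}[1-2\theta_{x}(1-\theta_{x})]\,L^{\text{ICL}}(z)$, and the same bound follows from $\E_{x}[2\theta_{x}(1-\theta_{x})]\le c$ together with $L^{\text{CCL}}(z)\ge 0$); and (b) invoking $\tau\ge\tfrac12$ at exactly the right spot, since this is what keeps $\theta_{x}$ on the decreasing branch of $2t(1-t)$ and thereby makes $2\tau(1-\tau)$ a uniform upper bound on the collision probability. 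Nonnegativity of $h$, which lets the weight on the ICL term be discarded, is the remaining ingredient.
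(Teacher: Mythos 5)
Your decomposition is exactly the paper's: condition on the collision event $\{c^{+}=c^{-}\}$, use independence of $c^{+}\sim\text{Ber}(\theta_x)$ and $c^{-}\sim\text{Ber}(1-\theta_x)$ to get the collision weight $2\theta_x(1-\theta_x)$, discard the weight on the non-collision term via nonnegativity of $h$, and bound the collision weight by $2\tau(1-\tau)$ using monotonicity of $t\mapsto 2t(1-t)$ on $[\tfrac12,1]$. The one place you diverge --- and where there is a genuine gap --- is your step (ii): you claim that ``the candidate construction of equation~\eqref{eq:theta_estimate}'' guarantees $\max(\theta_x,1-\theta_x)\ge\tau$ for the \emph{true} Bernoulli parameter $\theta_x$. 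Equation~\eqref{eq:theta_estimate} thresholds the \emph{empirical} frequency ratio $\hat{\theta}_x$, not the population conditional $\theta_x$ that drives the sampling of $c^{+},c^{-}$ in $L(z)$; the paper is explicit that its only modeling assumption is that the label conditional is Bernoulli with parameter $\theta_x$, with no assumption tying $\theta_x$ to the estimate. A token whose true $\theta_x$ is arbitrarily close to $\tfrac12$ can still pass the empirical threshold $\hat{\theta}_x\ge\tau$ on a finite corpus, in which case its collision weight is close to $\tfrac12$ and exceeds $2\tau(1-\tau)$ for any $\tau$ bounded away from $\tfrac12$; so the pointwise bound $2\theta_x(1-\theta_x)\le c$ you invoke does not follow from the construction alone.

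The paper closes this gap (at least formally) with a step you are missing: it notes that equation~\eqref{eq:theta_estimate} is the MLE of the Bernoulli parameter, invokes consistency of the MLE --- $f(\hat{\theta}_x)$ is a consistent estimator of $f(\theta_x)$ for the continuous map $f(t)=2t(1-t)$, cf.\ equation~\eqref{eq:mle_cons} --- and on that basis replaces $2\theta_x(1-\theta_x)$ by $2\hat{\theta}_x(1-\hat{\theta}_x)$ inside the bound, after which $\hat{\theta}_x\ge\tau\ge\tfrac12$ and monotonicity give $2\hat{\theta}_x(1-\hat{\theta}_x)\le 2\tau(1-\tau)$. So the inequality, as the paper proves it, is an asymptotic (in the size of the pretraining corpus) statement, not the unconditional pointwise one your argument asserts; to repair your write-up you either need this consistency step with the corresponding qualification, or you must add the assumption that the threshold $\tau$ is imposed on the true conditionals. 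Your bookkeeping caveat (a) on how the conditioning interacts with the outer expectation over $x$ is resolved correctly and matches the paper's pointwise reading, and your use of nonnegativity of $h$ to drop the weight on the ICL term is the paper's step (d).
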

\begin{proof}
For ease of exposition, we will denote $h(z_{x}, z_{m^{+}}, z_{m^{-}})$ as $\mathbbm{g}$. We first separate $L^{\neq}(z)$ from $L(z)$ as follows,
\begin{align*}
	L(z) & =  \E_{\substack{x \sim p_{\text{data}} \\ \nonumber
			c^{+} \sim  \theta_{x} \\ c^{-} \sim (1-\theta_{x})}}
	\left[ \E_{\substack{m^{+} \sim D_{c^{+} } \\ m^{-} \sim D_{c^{-} }}} \left[ \mathbbm{g} \right] \right]  \\
	& =^{(\textbf{a})} \E_{\substack{x \sim p_{\text{data}}  \\
			c^{+} \sim  \theta_{x} \\ c^{-} \sim (1-\theta_{x})}}
	\left[
	\E_{\mathbbm{1}(c^{+} = c^{-})} \left[
	\E_{\substack{m^{+} \sim D_{c^{+} } \\ m^{-} \sim D_{c^{-} }}} \left[ \mathbbm{g} \mid \mathbbm{1}(c^{+} = c^{-}) \right]
	\right]
	\right] \\ \nonumber
	& =^{(\textbf{b})} \E_{\substack{x \sim p_{\text{data}} \\
			c^{+} \sim  \theta_{x} \\ c^{-} \sim (1-\theta_{x})}}
	\left[ \substack{\E_{\substack{m^{+} \sim D_{c^{+} } \\ m^{-} \sim D_{c^{-} }}} \left[ \mathbbm{g} \mid  c^{+} = c^{-}  \right] \text{Pr}(\mathbbm{1}(c^{+} = c^{-})) \\ \bm{+} \\
		\E_{\substack{m^{+} \sim D_{c^{+} } \\ m^{-} \sim D_{c^{-} }}} \left[ \mathbbm{g} \mid  c^{+} \neq c^{-}  \right] \text{Pr}(\mathbbm{1}(c^{+} \neq c^{-}))} \right]  \\ \nonumber
	& =^{(\textbf{c})} \E_{\substack{x \sim p_{\text{data}} \\
			c^{+} \sim  \theta_{x} \\ c^{-} \sim (1-\theta_{x})}}
	\left[ \substack{\E_{\substack{m^{+} \sim D_{c^{+} } \\ m^{-} \sim D_{c^{-} }}} \big[ \mathbbm{g} \mid  c^{+} = c^{-}  \big] (2\theta_x(1-\theta_x)) \\ \bm{+} \\
		\E_{\substack{m^{+} \sim D_{c^{+} } \\ m^{-} \sim D_{c^{-} }}} \big[ \mathbbm{g} \mid  c^{+} \neq c^{-}  \big]
		(1-2\theta_x(1-\theta_x))} \right]  \\ \nonumber
	& \leq^{(\textbf{d})} \E_{\substack{x \sim p_{\text{data}} \\
			c^{+} \sim  \theta_{x} \\ c^{-} \sim (1-\theta_{x})}}
	\left[ \substack{\E_{\substack{m^{+} \sim D_{c^{+} } \\ m^{-} \sim D_{c^{-} }}} \left[ \mathbbm{g} \mid  c^{+} = c^{-}  \right] 2\theta_x(1-\theta_x)\\ \bm{+} \\
		\E_{\substack{m^{+} \sim D_{c^{+} } \\ m^{-} \sim D_{c^{-} }}} \left[ \mathbbm{g} \mid  c^{+} \neq c^{-}  \right]} \right]  \\ \nonumber
	& =^{(\textbf{e})} \E_{\substack{x \sim p_{\text{data}} \\
			c^{+} \sim  \theta_{x} \\ c^{-} \sim (1-\theta_{x})}}
	\left[\E_{\substack{m^{+} \sim D_{c^{+} } \\ m^{-} \sim D_{c^{-} }}} \left[ \mathbbm{g} \mid  c^{+} = c^{-}  \right] 2\theta_x(1-\theta_x) \right] \bm{+} L^{\text{ICL}}(z) \nonumber
\end{align*}
Where, step (a) follows from Iterated expectation w.r.t $\mathbbm{1}(c^{+} = c^{-})$. Step (b) follows from opening up the expectation w.r.t $\mathbbm{1}(c^{+} = c^{-})$. Step (c) follows from property of Bernoulli distribution, i.e $\text{Pr}(\mathbbm{1}(c^{+} = c^{-})) = 2\theta_x (1-\theta_x)$. Step (d) follows from non-negativity of $2\theta_x (1-\theta_x)$.

Next, we connect true parameters $\theta_x$ with the estimates from equation~\eqref{eq:theta_estimate}, which allows us to get a upper bound on the first term of (e). First we note that equation~\eqref{eq:theta_estimate} is a MLE of Bernoulli parameter $\theta_x$. From a classical result on consistency of MLEs, we note that for any continuous function $f$ of $\theta_x$; $f(\hat{\theta_x})$ is a consistent estimator of $f(\theta_x)$. Formally,
\begin{align} \label{eq:mle_cons}
	\lim\limits_{n \to \infty} P_{\theta_{x}}(\mid f(\hat{\theta_x}) - f(\theta_x) \mid  \geq \epsilon) = 0
\end{align}
(see~\citep{casella2021statistical} for a proof). Noting that $f(\theta_x) = 2\theta_x(1-\theta_x)$ is a continuous function of $\theta_x$, we apply result~\eqref{eq:mle_cons} to $f(\theta_x)$ and replace $f(\theta_x)$ with its estimate $f(\hat{\theta_x})$ in step (e) above,
		\begin{align*}
			L(z) & \leq \E_{\substack{x \sim p_{\text{data}} \\
					c^{+} \sim  \theta_{x} \\ c^{-} \sim (1-\theta_{x})}}
			[\E_{\substack{m^{+} \sim D_{c^{+} } \\ m^{-} \sim D_{c^{-} }}} \big[ \mathbbm{g} \mid  c^{+} = c^{-}  \big]
			2 \hat{\theta_x}(1-\hat{\theta_x})] \bm{+} L^{\text{ICL}}(z)  \\
			& \leq 2\tau(1-\tau) L^{\text{CCL}}(z) + L^{\text{ICL}}(z)
		\end{align*}
We note that $2\hat{\theta_x}(1-\hat{\theta_x})$ is a monotonically decreasing function of $\hat{\theta_x}$ in domain $[0.5,1]$ (we chose $\hat{\theta_x} \geq 0.5$ for all $x$ from equation~\eqref{eq:theta_estimate}). This implies there exists $\tau \geq 0.5$ such that $2\hat{\theta_x}(1-\hat{\theta_x}) \leq 2\tau(1-\tau)$ for all $x$, from which the claim follows immediately.
\end{proof}
%
As an immediate consequence of Theorem~\ref{thm:upper_bound}, by setting a $\tau > 0.5$ our sampling algorithm can be conceptualized as minimizing the above upper bound on $L(z)$ w.r.t $\tau$ by down weighing the CCL component. Of course, if we set $\tau=1$, we can nullify the entire CCL component. However this comes at the cost of having very few candidates $x$, which satisfy $\theta_x=1$ in any practical setting. Thus severely limiting the size of the training data available for minimizing the empirical contrastive loss in equation~\eqref{eq:overall_loss}.

Theorem~\ref{thm:upper_bound} although providing an upper bound on $L(z)$ presents only a partial picture because the upper bound can be loose and minimizing it w.r.t $\tau$ may not explain the effects on minimization of $L(z)$. Therefore, our next step is to prove the existence of a suitable $\tau$ dependent lower bound on $L(z)$.
\begin{theorem} \label{thm:lower_bound}
	The contrastive loss $L(z)$ is lower bounded by
	\begin{align*}
		(1-2\tau(1-\tau)) \E_{\substack{x \sim p_{\text{data}} \\
				c^{+} \sim  \theta_{x} \\ c^{-} \sim (1-\theta_{x})}} \left[ h(z_{x}\cdot(\mu_{c_{x}^{-}} - \mu_{c_{x}^{+}})) \mid c^{+} \neq c^{-}  \right]
	\end{align*}
	where $\mu_{c_{x}^{+}}=\E_{p\in D_{c^{+}}}[p]$, $\mu_{c_{x}^{-}}=\E_{p\in D_{c^{-}}} [p]$ are the mean of the two classes $c^+,c^{-}$. Where $2\tau(1-\tau) \in[0.5,1]$ and $\tau \geq \frac{1}{2}$.
\end{theorem}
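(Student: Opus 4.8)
The plan is to recycle the decomposition of $L(z)$ obtained in steps (a)--(c) of the proof of Theorem~\ref{thm:upper_bound}, but this time to keep the ``ideal'' term and discard the class-collision term instead of the other way around. Writing $\mathbbm{g}=h(z_x,z_{m^{+}},z_{m^{-}})$ as before, step (c) of that proof gives
\begin{align*}
	L(z) = \E_{\substack{x \sim p_{\text{data}} \\ c^{+} \sim \theta_{x} \\ c^{-} \sim (1-\theta_{x})}}\Big[ \E\big[\mathbbm{g}\mid c^{+}=c^{-}\big]\, 2\theta_x(1-\theta_x) \;+\; \E\big[\mathbbm{g}\mid c^{+}\neq c^{-}\big]\,\big(1-2\theta_x(1-\theta_x)\big)\Big].
\end{align*}
Since $h(t)=\log(1+\exp(t/\gamma))\geq 0$ pointwise and $2\theta_x(1-\theta_x)\geq 0$, the first summand is non-negative, so dropping it yields the lower bound $L(z)\geq \E_{x,c^{+},c^{-}}\big[\E[\mathbbm{g}\mid c^{+}\neq c^{-}]\,(1-2\theta_x(1-\theta_x))\big]$.

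Next I would turn the $x$-dependent coefficient into a uniform constant. Exactly as in Theorem~\ref{thm:upper_bound}, I would invoke consistency of the MLE (equation~\eqref{eq:mle_cons}) applied to the continuous map $t\mapsto 2t(1-t)$ to pass from $\theta_x$ to the estimate $\hat\theta_x$ of equation~\eqref{eq:theta_estimate}; because that equation enforces $\hat\theta_x\geq\tau\geq\tfrac12$ for every candidate token and $t\mapsto 2t(1-t)$ is monotonically decreasing on $[\tfrac12,1]$, we get $1-2\hat\theta_x(1-\hat\theta_x)\geq 1-2\tau(1-\tau)$, a quantity that is itself non-negative since $\tau(1-\tau)\leq\tfrac14$. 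Combining this with the non-negativity of the inner conditional expectation $\E[\mathbbm{g}\mid c^{+}\neq c^{-}]$ lets me pull the constant out of the outer expectation: $L(z)\geq (1-2\tau(1-\tau))\,\E_{x,c^{+},c^{-}}\big[\E[\mathbbm{g}\mid c^{+}\neq c^{-}]\big]$.

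The only step with real content is a Jensen argument on the innermost expectation. Fix a triple $(x,c^{+},c^{-})$ with $c^{+}\neq c^{-}$; conditionally, $m^{+}\sim D_{c^{+}}$ and $m^{-}\sim D_{c^{-}}$ are drawn independently, and $(z_{m^{+}},z_{m^{-}})\mapsto h\big(z_x\cdot(z_{m^{-}}-z_{m^{+}})\big)$ is convex, being the softplus (a convex function) precomposed with a map affine in $(z_{m^{+}},z_{m^{-}})$. Jensen's inequality then gives $\E_{m^{+},m^{-}}\big[h(z_x\cdot(z_{m^{-}}-z_{m^{+}}))\big]\geq h\big(z_x\cdot(\E[z_{m^{-}}]-\E[z_{m^{+}}])\big)=h\big(z_x\cdot(\mu_{c_x^{-}}-\mu_{c_x^{+}})\big)$, where $\mu_{c_x^{+}},\mu_{c_x^{-}}$ are the class means. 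Substituting this into the previous display produces exactly the claimed bound. I expect the main (minor) obstacle to be the measure-theoretic bookkeeping of the nested expectations: one has to be careful that the conditioning event $\{c^{+}\neq c^{-}\}$ lies in the $\sigma$-algebra generated by $(c^{+},c^{-})$, so that applying Jensen \emph{after} conditioning on it, at the level of the $(m^{+},m^{-})$ randomization, is legitimate; the convexity and monotonicity facts and the MLE-consistency passage are routine and already used in Theorem~\ref{thm:upper_bound}.
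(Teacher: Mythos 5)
Your proposal is correct and follows essentially the same route as the paper's proof: the same step (a)--(c) decomposition from Theorem~\ref{thm:upper_bound}, dropping the non-negative class-collision term, Jensen's inequality via convexity of the softplus $h$ composed with an affine map to pass to the class means, and the MLE-consistency argument with $2\hat\theta_x(1-\hat\theta_x)\leq 2\tau(1-\tau)$ to replace the $x$-dependent coefficient by $1-2\tau(1-\tau)$. The only difference is that you pull out the constant before applying Jensen while the paper does it afterward, which is immaterial.
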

\begin{proof}
	Using the definition of $L(z)$ and following steps (a)-(c) in proof of Theorem~\ref{thm:upper_bound} we arrive at,
			\begin{align*}
				L(z) & = \E_{\substack{x \sim p_{\text{data}} \\
						c^{+} \sim  \theta_{x} \\ c^{-} \sim (1-\theta_{x})}}
				\left[ \substack{\E_{\substack{m^{+} \sim D_{c^{+} } \\ m^{-} \sim D_{c^{-} }}} [ h(z_{x}\cdot(m^{-} - m^{+})) \mid  c^{+} = c^{-}] (2\theta_x(1-\theta_x)) \\ \bm{+} \\
					\E_{\substack{m^{+} \sim D_{c^{+} } \\ m^{-} \sim D_{c^{-} }}} [ h(z_{x}\cdot(m^{-} - m^{+})) \mid  c^{+} \neq c^{-}] (1-2\theta_x(1-\theta_x))} \right]  \\
				& \geq^{{(\textbf{b})}} \E_{\substack{x \sim p_{\text{data}} \\
						c^{+} \sim  \theta_{x} \\ c^{-} \sim (1-\theta_{x})}}
				\left[\E_{\substack{m^{+} \sim D_{c^{+} } \\ m^{-} \sim D_{c^{-} }}} \substack{[ h(z_{x}\cdot(m^{-} - m^{+})) \mid  c^{+} \neq c^{-}] (1-2\theta_x(1-\theta_x))} \right] \\\
				& \geq^{{(\textbf{c})}} \E_{\substack{x \sim p_{\text{data}} \\
						c^{+} \sim  \theta_{x} \\ c^{-} \sim (1-\theta_{x})}}
				\left[ (1-2\theta_x(1-\theta_x)) h(z_{x}\cdot(\mu_{c_{x}^{-}} - \mu_{c_{x}^{+}})) \mid c^{+} \neq c^{-} \right]
			\end{align*}
	where step (b) follows from non-negativity of expectation of loss functions and the fact that $2\theta_x(1-\theta_x) \geq 0$. Step (c) follows from convexity of log-sum-exp function $h$ (w.r.t $z$) and Jensen's inequality applied to inner expectation and $h$. Once again as in proof of Theorem~\ref{thm:upper_bound} using consistency of MLE (equation~\eqref{eq:mle_cons}), we replace $(1-2\theta_x(1-\theta_x))$ with its estimated value in step (c) above.
			\begin{align*}
				L(z) & \geq \E_{\substack{x \sim p_{\text{data}} \\
						c^{+} \sim  \theta_{x} \\ c^{-} \sim (1-\theta_{x})}}
				\left[ (1-2\hat{\theta_x}(1-\hat{\theta_x})) h(z_{x}\cdot(\mu_{c_{x}^{-}} - \mu_{c_{x}^{+}})) \mid c^{+} \neq c^{-}  \right]
				\\
				& \geq (1-2\tau(1-\tau))  \E_{\substack{x \sim p_{\text{data}} \\
						c^{+} \sim  \theta_{x} \\ c^{-} \sim (1-\theta_{x})}}
				\left[ h(z_{x}\cdot(\mu_{c_{x}^{-}} - \mu_{c_{x}^{+}})) \mid c^{+} \neq c^{-} \right]
			\end{align*}
	Following the proof of Theorem~\ref{thm:upper_bound}, we note that $2\hat{\theta_x}(1-\hat{\theta_x}) \leq 2\tau(1-\tau)$ for all $x$. This implies $1-2\hat{\theta_x}(1-\hat{\theta_x}) \geq 1-2\tau(1-\tau)$ for all $x$, from which the claim follows immediately.
\end{proof}
%
Therefore, attempting to maximize the lower bound in Theorem~\ref{thm:lower_bound} w.r.t $\tau$ is equivalent to minimizing $2\tau(1-\tau)$ w.r.t $\tau$. Thus, combining Theorem~\ref{thm:upper_bound} and Theorem~\ref{thm:lower_bound} we note that by choosing an appropriate value of $\tau > 0.5$ our sampling algorithm minimizes an upper bound on $L(z)$ (\S~Theorem~\ref{thm:upper_bound}) as well as maximizes a lower bound on it (\S~Theorem~\ref{thm:lower_bound}).

Once again, its important to stress that \textit{minimization} of $2\tau(1-\tau)$ in context of the analysis does not mean choosing its minima w.r.t $\tau$ (which would be $\tau=1$). Instead, the analysis simply indicates that by choosing a $\tau > 0.5$ our sampling mechanism results in a trade-off between CCL and ICL components of the contrastive loss. The correct choice of $\tau$ is still application dependent. Choosing a $\tau \approx 0.5$ results in more training data at the risk of more class-collision. On the other hand, choosing $\tau \rightarrow 1$ incurs little CCL loss at the cost of smaller training datasets for minimizing ICL and possibly hurts performance due to poor representations learnt. Thus the analysis illustrates, by choosing an appropriate threshold on label conditional probability ($\tau$) our method (in~\S~\ref{ssec:token_extrapolation} and~\S~\ref{ssec:contrastive_pretrain}) trades-off between components of contrastive loss.

\section{Experimental Setup} \label{section:experiment_setup}
We perform experiments on two collections of total $13$ datasets from disaster and sentiment domains. For each collection, we consider one dataset as the current and rest as related datasets and apply our pretraining technique on related datasets only. Training set of the current dataset is used only to train classifier models and not for contrastive pretraining. A 2-layer BiLSTM model is used as classifier for majority of our experiments as BiLSTM model is shown to have near state-of-the-art performance across the two domains on which we report our experiments. 
However, our pretraining method is generic and not tied to a particular choice of classification model. To illustrate this in our experiments we show that our pretraining mechanism improves the performance of a number of SOTA models in different application domains.

To ease the explanation of various scenarios, we follow the notations as per Table \ref{table:notations}.
\begin{table}[!hbt]
\centering
\begin{adjustbox}{
width=0.9\linewidth
}
    \begin{tabular}{@{}ll@{}}
        \toprule
        \textbf{Notation} & \textbf{Meaning} \\
        \midrule
        \textbf{$\mathbf{P}$} & Union of labeled related datasets \\
        \textbf{$\mathbf{O}_\texttt{TR}$} & Train set of current dataset \\
        \textbf{$\mathbf{O}_\texttt{TE}$} & Test set of current dataset \\
        \textbf{$\mathbf{f_{CE}(.)}$} & Training with cross-entropy loss \\
        \textbf{$\mathbf{h_{CL}^{S}(.)}$} & Pretraining with contrastive loss over sentences \\
        \textbf{$\mathbf{h_{CL}^{T}(.)}$} & Pretraining with contrastive loss over tokens \\
        \bottomrule
    \end{tabular}
\end{adjustbox}
\caption{Notations}
\label{table:notations}
\end{table}
We employ $2$ phase setup for our experiments. In the first phase, we apply a GCN with contrastive loss over the token-graph to learn vectors for each token ($\mathbf{h_{CL}^{T|S}(.)}$). Then, in the second phase, we train the classifier with Cross-Entropy (CE) loss ($\mathbf{f_{CE}(.)}$) where example tokens are represented by the pretrained token embeddings obtained in the first phase.
We report Weighted F$_1$ scores averaged over $3$ runs on $\mathbf{O}_\texttt{TE}$ with different seeds for all our experiments. We present and analyze results, aggregated across all datasets in disaster and sentiment collections, of various experiments in context of several research questions in \S~\ref{section:results}. Next, we discuss the datasets used in our experiments.
%
\subsection{Datasets} \label{ssec:datasets}
Our review sentiment collection has $5$ datasets containing reviews and the disaster collection has $8$ datasets containing tweets. The sentiment datasets are from \citep{sentiment_dataset} with binary annotation signifying positive or negative sentiment from $5$ domains (books, dvd, electronics, kitchen, video) each containing $3000$ reviews. Following \citep{sup_icassp}, we use $400$ random examples as test and split the remaining examples into $80\%:20\%$ ratio as train ($4480$) and validation ($1120$) sets. 
The disaster datasets are collected from $3$ sources. We use the $6$ datasets
``2013 Alberta Flood'' (AF), ``2013 Boston Bombing'' (BB), ``Oklahoma Tornado Season 2013'' (OT), ``Queensland Floods 2013'' (QF) and ``Sandy Hurricane 2012'' (SH) and ``2013 West Texas Explosion'' (WE)
from CrisisLexT6 \cite{crisislext6}. Additionally, also use ``2015 Nepal Earthquake'' (NE) dataset from \cite{qcri} and Kaggle's ``Natural Language Processing with Disaster Tweets'' (KL) competition data\footnote{https://www.kaggle.com/c/nlp-getting-started/overview}~\cite{kaggle_data}. Only the ``train.csv'' portion of the KL dataset is used for our experiments. All disaster datasets are tagged with binary relevance information as either \textit{relevant} (Class $1$) or \textit{not relevant} (Class $0$). We divide each dataset into train ($60$\%), validation ($10$\%) and test ($30$\%) sets. Table \ref{table:datasets} shows more details about each dataset. NE and KL datasets have special significance to our experiments. As the NE dataset falls outside of CrisisLexT6, it works as a heterogeneous dataset to verify the effectiveness of our technique. Additionally, the KL dataset contains examples from multiple disasters including storm, wildfire, flood, etc unlike the rest. We believe, incorporation of these two datasets works as an additional evidence on the effectiveness of our approach in different scenarios.
\begin{table}[!hbt]
	\centering
	\begin{adjustbox}{
			width=\linewidth
		}
		\begin{tabular}{@{}lrr|rrr@{}}
			\toprule
			\textbf{Dataset} & \textbf{1} & \textbf{0} & \textbf{Train} & \textbf{Val} & \textbf{Test} \\
			\midrule
            Alberta Flood (AF) & 5189 & 4842 & 6019 & 1002 & 3008 \\
            Boston Bombing (BB) & 5648 & 4364 & 6007 & 1001 & 3004 \\
            Kaggle (KL) & 3356 & 4336 & 4622 &  816 & 2164 \\
            Nepal Earthquake (NE) & 5527 & 6141 & 7000 & 1166 & 3502 \\
            Oklahoma Tornado (OT) & 4827 & 5165 & 5994 &  998 & 2997 \\
            Queensland  Floods (QF) & 5414 & 4619 & 6019 & 1003 & 3011 \\
            Sandy Hurricane (SH) & 6138 & 3870 & 6005 & 1001 & 3002 \\
            West Texas Explosion (WE) & 5246 & 4760 & 6001 & 1001 & 3004 \\ \bottomrule
		\end{tabular}
	\end{adjustbox}
	\caption{Details of $8$ disaster datasets used in our experiments. Classes $1$/$0$ indicates tweets relevant/irrelevant to the disaster.}
	\label{table:datasets}
\end{table}
\section{Results} \label{section:results} We now discuss the experimental results with reference to several important research questions to understand the usefulness of the proposed pretraining approach that we refer to as Graph Contrastive pretraining for Text (GCPT). 
We run our experiments on each dataset and report the scores averaged over all datasets in a collection. Refer to Appendix \S \ref{appendix} for more detailed dataset-wise results.

\noindent \textbf{Q1) Does GCPT outperform unsupervised pretrained embedding baselines?} \par
GCPT uses supervised data and propagates that information to individual tokens by graph contrastive learning. 
In this experiment, we want to see if our technique is useful over unsupervised pretraining, as done in Glove, Word2Vec, etc.
To that effect, we perform text classification on current data using a classifier with CE ($\mathbf{f_{CE}(\mathbf{O}_\texttt{TR})}$) initialized with different word embeddings to see which initialization leads to better performance. Specifically,
\begin{enumerate}
	\item \emph{Word embeddings pretrained on large corpus}: We use GloVe~\citep{glove} embedding pretrained on large general purpose corpora. In addition, we also use a Word2Vec model pretrained on a large corpus of disaster-related data with $9.4$ million tokens \citep{cnlp} denoted as CNLP. As CNLP is trained on disaster data only, we do not perform experiments on sentiment datasets using CNLP.
	\item \emph{Word embeddings trained on available corpus}: We train a Word2Vec \citep{word2vec} model on all data available from related datasets. We denote this embedding set as W2V.
\end{enumerate}
Table \ref{tab:embs_transfer} present our findings for these experiments on both disaster and sentiment datasets. GCPT outperforms all unsupervised pretrained embeddings due to its capability to capture and transfer supervised information effectively. We are able to achieve an absolute improvement of $3.3\%$ for disaster and $1.7\%$ for sentiment datasets. We also observe that GloVe outperforms both CNLP and W2V embeddings for majority of the datasets. For this reason, we use GloVe for the following experiments.
\begin{table}[!htbp]
	\centering
	\begin{tabular}{@{}lrr@{}}
		\toprule
		\textbf{Embedding} & \textbf{Disaster} & \textbf{Sentiment} \\
		\midrule
		GloVe   & 76.52 & \underline{56.11} \\
		W2V     & 76.28 & 55.23 \\
		CNLP    & \underline{76.73} & -     \\
		GCPT    & \textbf{80.03} & \textbf{57.81} \\
		\bottomrule
	\end{tabular}
	\caption{Performance comparison of GCPT with unsupervised pretrained embeddings aggregated over disaster and sentiment datasets.}
	\label{tab:embs_transfer}
\end{table}

\noindent \textbf{Q2) Is GCPT suitable for different types of neural networks?} \par
We verify the applicability of GCPT with different types of neural networks by replacing the BiLSTM classifier with other neural architectures such as MLP and CNN. For MLP, we represent an input example as the average of the embeddings of its tokens, and denote the method as MLP-BoW. We also use DenseCNN \citep{densecnn} and XML-CNN \citep{xmlcnn} following \citep{disaster_cnn} for disaster and DPCNN \citep{dpcnn} for sentiment datasets. The results are presented in Table \ref{tab:models_transfer}.
Our experiments show that GCPT performs well for almost all types of neural networks except MLP-BoW for disaster data, highlighting the benefit brought in by the refinement in the embedding space through the contrastive learning. In case of MLP-BoW, averaging over contrasted tokens within an example makes it difficult to capture proper context resulting in lower performance.
\begin{table}[!htbp]
	\centering
	\begin{adjustbox}{
		}
		\begin{tabular}{@{}lrr|rr@{}}
			\toprule
		& \multicolumn{2}{c|}{\textbf{Disaster}} & \multicolumn{2}{c}{\textbf{Sentiment}} \\
		\cmidrule(l){2-3} \cmidrule(l){4-5}
		\textbf{Model} & \textbf{CNLP} & \textbf{GCPT} & \textbf{GloVe} & \textbf{GCPT} \\
			\midrule
			MLP-BoW     & \textbf{53.74} & 50.95 & 52.94 & \textbf{55.48} \\
			CNN         & 77.30 & \textbf{78.17} & 54.89 & \textbf{56.85} \\
			DenseCNN    & 78.81 & \textbf{79.99} & -     & -     \\
			XML-CNN     & 71.74 & \textbf{73.90} & -     & -     \\
			DPCNN       & -     & -     & 55.36 & \textbf{57.38} \\
			\bottomrule
		\end{tabular}
	\end{adjustbox}
	\caption{GloVe vs GCPT performance with various neural networks.}
	\label{tab:models_transfer}
\end{table}

\noindent \textbf{Q3) Is token-level contrastive formulation effective?} \par
This experiment verifies the effectiveness of GCPT over other straw-man approaches of incorporating the supervised information by replacing the GCPT pretraining phase (i.e. replacing $\mathbf{h_{CL}^{T}(\mathbf{P})}$). We consider two approaches that utilize supervised information present in related data:
\begin{enumerate}
	\item \label{ecl} \emph{Example-level Contrastive Learning (ECL)}: In this intuitive approach, we employ contrastive learning over \textit{examples} instead of tokens. We pretrain a model with contrastive loss over examples from related datasets (i.e. $\mathbf{h_{CL}^{S}(\mathbf{P})}$ instead of $\mathbf{h_{CL}^{T}(\mathbf{P})}$) and finally re-train the classifier using the current dataset to fine-tune ($\mathbf{f_{CE}(\mathbf{O}_\texttt{TR})}$). We consider each example in the training set as a candidate item for pretraining. For each candidate, we randomly select $5$ examples from the same class as positive for that instance and $5$ examples from other classes as negative examples to contrast. Trained model weights and token embeddings are transferred from pretraining to fine-tuning dataset.
	\item \label{2f} \emph{Related Dataset Fine-tuning (RDF)}: Here, we treat all labeled data (both related and current dataset) as training data. We directly fine-tune a model using related datasets with CE loss instead of any fine-tuning (i.e. $\mathbf{f_{CE}(\mathbf{P})}$ instead of $\mathbf{h_{CL}^{T}(\mathbf{P})}$), followed by fine-tuning the model again with the current dataset ($\mathbf{f_{CE}(\mathbf{O}_\texttt{TR})}$). We compare with this baseline by introducing $\mathbf{h_{CL}^{T}(\mathbf{P})}$ to the front of the pipeline and replacing the embeddings in subsequent phases.
\end{enumerate}
Table \ref{tab:examcon} presents the results for ECL (example-level) compared with GCPT (token-level). Our findings show that GCPT outperforms ECL by $1.91\%$ and $1.69\%$ for disaster and sentiment datasets on average. However, ECL performs better for the KL dataset. As KL contains data from multiple disasters, token-level contrast might reduce separability among various disasters, confusing the model. From a practical standpoint, it is generally unexpected that data will be coming from multiple domains during an ongoing disaster.
\begin{table}[!htbp]
	\centering
	\begin{adjustbox}{
			width=0.99\linewidth
		}
		\begin{tabular}{@{}lrrr@{}}
			\toprule
			\textbf{Dataset} & \textbf{ECL} & \textbf{GCPT} \\ \midrule
			AF & 78.90 & \textbf{80.23} \\
			BB & 66.64 & \textbf{71.35} \\
			KL & \textbf{56.45} & 55.44 \\
			NE & 52.46 & \textbf{53.76} \\
			OT & 65.14 & \textbf{66.24} \\
			QF & 69.98 & \textbf{71.66} \\
			SH & 61.64 & \textbf{63.86} \\
			WE & 70.19 & \textbf{74.14} \\
			\midrule
			Avg & 65.17 & \textbf{67.08} \\
			\bottomrule
		\end{tabular}
		\quad
		\centering
		\begin{tabular}{@{}lrrr@{}}
			\toprule
			\textbf{Dataset} & \textbf{ECL} & \textbf{GCPT} \\ \midrule
			books       & 56.45 & \textbf{58.48} \\
			dvd         & 52.72 & \textbf{53.62} \\
			electronics & 61.77 & \textbf{64.66} \\
			kitchen     & 64.98 & \textbf{65.33} \\
			video       & 58.63 & \textbf{60.92} \\
			\midrule
			Avg         & 58.91 & \textbf{60.60} \\
			\bottomrule
		\end{tabular}
	\end{adjustbox}
	\caption{Performance comparison of ECL (Ref. \ref{ecl}) \& GCPT for disaster (left) sentiment (right) datasets.}
	\label{tab:examcon}
\end{table}
%

Recall that we achieved $80.03\%~\&~57.81\%$ (Ref. Table \ref{tab:embs_transfer}) for disaster and sentiment datasets when only current dataset data is utilized. Utilizing all available data to train the model (RDF setting), improves performance by $5.9\%~\&~0.46\%$ on average for disaster and sentiment datasets as shown in Table~\ref{tab:embs_rdf}. However, when we incorporate GCPT into the RDF setting, by replacing the token embeddings, we achieve even higher performance improvement of $7.81\%~\&~2.76\%$ for disaster and sentiment datasets respectively. We also perform the same experiment with other models (MLP-BoW, CNN, DenseCNN and XML-CNN) and found incorporating GCPT helps achieve better performance in $6$ out of $7$ scenarios as shown in Table \ref{tab:models_rdf}.
\begin{table}[!htbp]
	\centering
	\begin{tabular}{@{}lrr@{}}
		\toprule
		\textbf{Embedding} & \textbf{Disaster} & \textbf{Sentiment} \\
		\midrule
		GloVe   & \underline{82.62} & \underline{56.47} \\
		W2V     & 82.01 & 55.79 \\
		CNLP    & 82.60 & -     \\
		\midrule
		RDF (GCPT)& \textbf{84.32} & \textbf{58.43} \\
		\bottomrule
	\end{tabular}
	\caption{Performance comparison of GCPT with unsupervised pretrained embeddings in RDF scenario aggregated over disaster and sentiment datasets.}
	\label{tab:embs_rdf}
\end{table}
\begin{table}[!htbp]
\centering
\begin{adjustbox}{
	}
	\begin{tabular}{@{}lrr|rr@{}}
		\toprule
		& \multicolumn{2}{c|}{\textbf{Disaster}} & \multicolumn{2}{c}{\textbf{Sentiment}} \\
		\cmidrule(l){2-3} \cmidrule(l){4-5}
		\textbf{Model} & \textbf{GloVe} & \textbf{GCPT} & \textbf{GloVe} & \textbf{GCPT} \\
		\midrule
		MLP-BoW     & 65.98 & \textbf{70.02} & \textbf{50.46} & 50.22 \\
		CNN         & 81.94 & \textbf{83.25} & 51.84 & \textbf{53.37} \\
		DenseCNN    & 83.59 & \textbf{84.00} & -     & -     \\
		XML-CNN     & 83.27 & \textbf{84.02} & -     & -     \\
		DPCNN       & -     & -     & 58.36 & \textbf{62.60} \\
		\bottomrule
	\end{tabular}
\end{adjustbox}
\caption{GloVe and GCPT performance with various neural models in RDF scenario.}
\label{tab:models_rdf}
\end{table}

\noindent \textbf{Q4) What if no labeled data is available for the current dataset?} \par
In domains like disaster, it is expected that no labeled data is available for an ongoing disaster. Data labelling during an ongoing disaster is very costly. In such a scenario, we can deploy models trained on labeled data from related disasters only (i.e. $\mathbf{f_{C}(\mathbf{P})}$). We experiment with such a scenario by learning token representations and refer to this scenario as Zero-Shot setting.

Table \ref{tab:embs_zeroshot} reports the performance among different pretraining techniques. We observe that GCPT outperforms other baselines for all datasets except NE. This is interesting, as NE contains tweets from earthquake which is different from rest of the disaster types. However, We observe CNLP performs better than GCPT as it is trained on a large corpora of disaster data  which includes earthquake disaster. Table \ref{tab:models_zeroshot} presents performance comparison of W2V and GCPT for different models. We use W2V for this experiment, as it performs best (Ref. Table \ref{tab:embs_zeroshot}) in Zero-Shot scenario. We find a similar observation that GCPT outperforms W2V in $6$ out of $7$ scenarios.
We are able to improve performance $5.35\%$ on average just by replacing the token embeddings in zero-shot scenario. This is a very important in case of a new disaster as usually no labeled data is available in such a scenario. Our experiment show that we can utilize labeled data from related disasters effectively in such a scenario, saving crucial time.
\begin{table}[!htbp]
	\centering
	\begin{tabular}{@{}lrr@{}}
		\toprule
		\textbf{Embedding} & \textbf{Disaster} & \textbf{Sentiment} \\
		\midrule
		GloVe   & 62.76 & 52.22 \\
		W2V     & \underline{67.38} & \underline{53.16} \\
		CNLP    & 64.13 & -     \\
		GCPT    & \textbf{72.73} & \textbf{55.64} \\
		\bottomrule
	\end{tabular}
	\caption{Performance comparison of GCPT with unsupervised pretrained embeddings in Zero-Shot scenario aggregated over disaster and sentiment datasets.}
	\label{tab:embs_zeroshot}
\end{table}
\begin{table}[!htbp]
\centering
\begin{adjustbox}{
	}
	\begin{tabular}{@{}lrr|rr@{}}
		\toprule
		& \multicolumn{2}{c|}{\textbf{Disaster}} & \multicolumn{2}{c}{\textbf{Sentiment}} \\
		\cmidrule(l){2-3} \cmidrule(l){4-5}
		\textbf{Model} & \textbf{W2V} & \textbf{GCPT} & \textbf{W2V} & \textbf{GCPT} \\
		\midrule
		MLP-BoW     & 51.10 & \textbf{57.15} & \textbf{50.89} & 50.46 \\
		CNN         & 61.27 & \textbf{67.20} & 51.44 & \textbf{54.45} \\
		DenseCNN    & 61.41 & \textbf{64.34} & -     & -     \\
		XML-CNN     & 59.53 & \textbf{62.81} & -     & -     \\
		DPCNN       & -     & -     & 54.33 & \textbf{56.45} \\
		\bottomrule
	\end{tabular}
\end{adjustbox}
\caption{W2V and GCPT performance with various neural models in Zero-Shot scenario.}
\label{tab:models_zeroshot}
\end{table}

\noindent \textbf{Q5) Is GCPT effective to improve performance of transformer models?} \par
Transformer based language models require large amount of end-task specific data to perform well. Due to this, we wanted to experiment with transformer-based models by applying GCPT in knowledge distillation setting. We follow a similar setting as \cite{kd_paper} to fine-tune BERT \citep{bert} using Hard Knowledge Distillation. To analyze the effect of availability of additional data, we divide the current dataset train set into two equal splits and treat one as labeled and the other as unlabeled. As a baseline, a BERT model with a MLP classifier head (Bert) is trained using the labeled split. We train a 2-layer BiLSTM model with GCPT embeddings with the labeled split as teacher and predict on the unlabeled split of the data. Finally, we train another BERT with a MLP classifier (BKD) using labeled part with original labels and unlabeled part with labels predicted by the teacher model. We observe in Table \ref{tab:kd} that additional data with GCPT predicted labels boost the model performance significantly for all $13$ datasets.
%
%
\begin{table}[!htbp]
\centering
\begin{adjustbox}{
		width=0.99\linewidth
	}
	\begin{tabular}{@{}lrr@{}}
		\toprule
		\textbf{Dataset} & \textbf{Bert} & \textbf{BKD} \\ \midrule
		AF & 87.14 & \textbf{93.74} \\
		BB & 87.01 & \textbf{91.11} \\
		KL & 72.44 & \textbf{77.36} \\
		NE & 70.62 & \textbf{75.79} \\
		OT & 88.37 & \textbf{91.72} \\
		QF & 90.85 & \textbf{94.68} \\
		SH & 85.23 & \textbf{88.35} \\
		WE & 91.82 & \textbf{96.70} \\
		\midrule
		Avg & 84.18 & \textbf{88.68} \\
		\bottomrule
	\end{tabular}
	\quad
	\centering
	\begin{tabular}{@{}lrrr@{}}
		\toprule
		\textbf{Dataset} & \textbf{Bert} & \textbf{BKD} \\ \midrule
		books       & 86.66 & \textbf{89.74} \\
		dvd         & 87.41 & \textbf{91.00} \\
		electronics & 87.01 & \textbf{92.25} \\
		kitchen     & 89.21 & \textbf{94.90} \\
		video       & 87.64 & \textbf{93.24} \\
		\midrule
		Avg & 87.58 & \textbf{92.22} \\
		\bottomrule
	\end{tabular}
\end{adjustbox}
\caption{Performance comparison of Bert \& BKD.}
\label{tab:kd}
\end{table}
\section{Conclusion \& Future Work} \label{conclusion} In this paper, we propose a supervised contrastive pretraining technique to effectively utilize labeled data from related datasets. Our proposed supervised graph contrastive pretraining brings tokens with similar context from same class closer. By contrasting non-contextual tokens, the learned embeddings can capture the supervised information in meaningful way that generalize well to follow-up datasets. Our experiments over $8$ disaster and $5$ sentiment datasets show that this separability helps achieve better performance compared to utilizing the supervised information in other ways. We also show that Transformer based models can be taught (using knowledge distillation) to utilize this information.
In future, we plan to utilize sub-word level token information for the graph construction which we believe will reduce the vocabulary mismatch and further improve performance.
\nocite{casella2021statistical}

\nocite{casella2021statistical}
\bibliographystyle{ACM-Reference-Format}
\bibliography{sample-sigconf}

\appendix
\section{Technical Appendix} \label{appendix}
\subsection{Hyperparameter Configuration} \label{hyperparam}
We have two sets of hyperparameters to tune, during pre-training and fine-tuning. We set minimum token frequency as $5$ and minimum class conditional probability (i.e. $\tau$) as $0.9$. Maximum number of epochs during pre-training was set to $80$ with learning rate $0.005$. These values were set based on classification performance in the development set. We used the above pre-training hyper-parameters throughout all experiments.
Fine-tuning hyperparameters were also tuned based on the performance on the validation set with patience $4$. Learning rate was chosen from values $(10^{-2}, 10^{-3}, 10^{-4})$ with maximum epoch $10$ for disaster and $30$ for sentiment datasets. $300$ dimensional input vectors were used for all experiments. We set hidden dimension as $300$ whenever required for any experiments.
A system with a `Nvidia Tesla P100' GPU with $64$GB RAM and $56$ core CPU was used for all experiments.
For BERT related experiments, maximum number of epochs were set to $5$ with sequence length $128$.

\subsection{Vocabulary Overlaps} \label{vocab_overlap}
We mention the token-wise overlap percentage between related and current data for each dataset after pre-processing with minimum token frequency of $5$.
\begin{table}[!htbp]
\centering
    \begin{tabular}{@{}lr@{}}
    \toprule
        \textbf{Dataset} & \textbf{Overlap \%} \\ \midrule
        AF & 81.63 \\
        BB & 89.42 \\
        KL & 65.46 \\
        NE & 73.17 \\
        OT & 90.02 \\
        QF & 81.97 \\
        SH & 89.37 \\
        WE & 87.28 \\
    \bottomrule
    \end{tabular}
    \quad
    \centering
    \begin{tabular}{@{}lr@{}}
    \toprule
        \textbf{Dataset} & \textbf{Overlap \%} \\ \midrule
        books       & 80.94 \\
        dvd         & 87.44 \\
        electronics & 83.41 \\
        kitchen     & 84.36 \\
        video       & 88.43 \\
    \bottomrule
    \end{tabular}
\caption{Details of vocabulary overlap between related and current data for disaster (left) and sentiment (right) datasets.}
\label{tab:vocab_overlap}
\end{table}

\subsection{Dataset-wise Results} \label{granular_results}
Here, we report all dataset-wise results.
\begin{table}[!htbp]
    \centering
    \begin{tabular}{@{}lrrrrr@{}}
    \toprule
    \textbf{Dataset} & \textbf{GloVe} & \textbf{CNLP} & \textbf{W2V} & \textbf{GCPT} \\ 
        \midrule
        AF & 75.92 & 77.36 & \underline{79.74} & \textbf{83.87} \\
        BB & \underline{79.96} & 78.72 & 78.39 & \textbf{80.84} \\
        KL & 54.84 & 55.51 & \underline{57.53} & \textbf{61.30} \\
        NE & \underline{64.71} & 64.16 & 62.93 & \textbf{65.84} \\
        OT & \underline{80.56} & 80.09 & 77.42 & \textbf{82.49} \\
        QF & 81.54 & \underline{84.62} & 84.28 & \textbf{89.61} \\
        SH & \underline{85.29} & 85.10 & 84.75 & \textbf{85.61} \\
        WE & \underline{89.38} & 88.31 & 85.24 & \textbf{90.75} \\ 
        \bottomrule
    \end{tabular}
\caption{Performance comparison of GCPT with unsupervised pre-trained embeddings for disaster datasets.}
\label{tab:embs_disaster_transfer_granular}
\end{table}
\begin{table}[!htbp]
    \centering
    \begin{tabular}{@{}lrrrr@{}}
    \toprule
      \textbf{Dataset} & \textbf{GloVe} & \textbf{W2V} & \textbf{GCPT} \\
        \midrule
        books       & \underline{51.60} & 50.17 & \textbf{52.88} \\
        dvd         & \underline{55.82} & 55.11 & \textbf{56.43} \\
        electronics & \underline{58.24} & 56.36 & \textbf{60.79} \\
        kitchen     & \underline{60.72} & 59.63 & \textbf{63.44} \\
        video       & 54.19 & \underline{54.89} & \textbf{55.51} \\
    \bottomrule
    \end{tabular}
\caption{Performance comparison of GCPT with unsupervised pre-trained embeddings for sentiment datasets.}
\label{tab:embs_sentiment_transfer_granular}
\end{table}

\begin{table}[!htbp]
\centering
\scalebox{0.8}{
\begin{tabular}{@{}lrrrr@{}}
\toprule
\textbf{Dataset} & \textbf{MLP-BoW} & \textbf{CNN} & \textbf{DenseCNN} & \textbf{XML-CNN} \\
\midrule
    AF & 56.38 / 55.22 & 79.17 / 81.42 & 80.05 / 82.78 & 68.73 / 69.92 \\
    BB & 53.30 / 50.80 & 78.92 / 79.62 & 80.53 / 83.30 & 71.30 / 72.81 \\
    KL & 44.71 / 42.28 & 54.02 / 57.83 & 54.70 / 55.61 & 53.42 / 54.99 \\
    NE & 48.24 / 44.23 & 63.37 / 64.71 & 63.76 / 64.47 & 62.65 / 63.62 \\
    OT & 58.84 / 52.83 & 80.88 / 78.87 & 84.28 / 85.06 & 78.94 / 79.08 \\
    QF & 58.08 / 56.99 & 88.77 / 87.59 & 88.77 / 89.46 & 81.66 / 85.36 \\
    SH & 57.21 / 58.27 & 87.45 / 87.75 & 86.99 / 87.52 & 79.59 / 82.02 \\
    WE & 53.21 / 47.04 & 85.84 / 87.62 & 91.81 / 91.75 & 77.64 / 83.46 \\
\bottomrule
\end{tabular}
}
\caption{CNLP vs GCPT performance with various neural models for disaster datasets.}
\label{tab:models_disaster_transfer_granular}
\end{table}
%
\begin{table}[!htbp]
\centering
\scalebox{0.8}{
\begin{tabular}{@{}lrrrr@{}}
\toprule
\textbf{Dataset} & \textbf{MLP-BoW} & \textbf{CNN} & \textbf{DenseCNN} & \textbf{XML-CNN} \\
\midrule
    AF & 76.84 / 77.20  &  87.44 / 88.10  &  88.51 / 91.87  & 87.96 / 88.43 \\
    BB & 66.09 / 69.11  &  86.03 / 87.60  &  86.84 / 86.81  & 84.62 / 86.06 \\
    KL & 55.69 / 57.47  &  60.76 / 62.67  &  60.74 / 59.11  & 61.74 / 61.82 \\
    NE & 50.75 / 51.44  &  64.86 / 65.22  &  67.23 / 65.48  & 67.77 / 69.07 \\
    OT & 62.48 / 66.94  &  83.24 / 85.42  &  90.69 / 91.36  & 88.56 / 89.16 \\
    QF & 75.53 / 79.50  &  92.03 / 94.96  &  92.49 / 93.46  & 94.02 / 94.86 \\
    SH & 68.08 / 77.02  &  88.70 / 89.29  &  89.75 / 90.34  & 89.01 / 89.51 \\
    WE & 72.45 / 81.50  &  92.48 / 92.81  &  92.48 / 93.61  & 92.48 / 93.31 \\
\bottomrule
\end{tabular}
}
\caption{GloVe and GCPT performance with various neural models for disaster datasets in RDF scenario.}
\label{tab:models_disaster_alltrain_granular}
\end{table}
\begin{table}[!htbp]
\centering
\scalebox{0.8}{
\begin{tabular}{@{}lrrrr@{}}
\toprule
\textbf{Dataset} & \textbf{MLP-BoW} & \textbf{CNN} & \textbf{DenseCNN} & \textbf{XML-CNN} \\
\midrule
    AF & 57.69 / 59.66 & 74.66 / 76.41 & 74.33 / 76.15 & 71.72 / 74.54 \\
    BB & 58.71 / 67.76 & 62.30 / 68.82 & 61.25 / 64.31 & 59.74 / 60.37 \\
    KL & 42.35 / 55.26 & 54.79 / 54.97 & 56.08 / 55.86 & 55.37 / 51.71 \\
    NE & 37.14 / 32.52 & 51.06 / 52.53 & 51.54 / 52.77 & 47.94 / 50.17 \\
    OT & 44.51 / 62.57 & 61.08 / 65.39 & 63.22 / 65.80 & 57.57 / 64.90 \\
    QF & 58.21 / 50.95 & 66.11 / 81.25 & 66.96 / 71.26 & 68.85 / 77.63 \\
    SH & 57.12 / 70.82 & 57.88 / 64.53 & 54.84 / 58.33 & 55.37 / 57.79 \\
    WE & 53.07 / 57.68 & 62.32 / 73.72 & 63.11 / 70.24 & 59.75 / 66.08 \\
\bottomrule
\end{tabular}
}
\caption{W2V and GCPT performance with various neural models for disaster datasets in Zero-Shot scenario.}
\label{tab:models_disaster_zeroshot_granular}
\end{table}
\begin{table}[!htbp]
\centering
\begin{tabular}{@{}lrrr@{}}
\toprule
\textbf{Dataset} & \textbf{MLP-BoW} & \textbf{CNN} & \textbf{DPCNN} \\
    \midrule
    books       & 52.55 / 55.82 & 50.96 / 55.01 & 54.47 / 56.21 \\
    dvd         & 55.31 / 57.60 & 52.43 / 54.71 & 51.19 / 53.75 \\
    electronics & 50.37 / 57.39 & 60.44 / 61.89 & 60.25 / 60.72 \\
    kitchen     & 55.13 / 55.09 & 59.08 / 57.01 & 58.40 / 61.02 \\
    video       & 51.37 / 51.54 & 51.58 / 55.66 & 52.49 / 55.21 \\
\bottomrule
\end{tabular}
\caption{GloVe vs GCPT performance with various neural models for sentiment datasets.}
\label{tab:models_sentiment_transfer_granular}
\end{table}%
\begin{table}[!htbp]
\centering
\begin{tabular}{@{}lrrrrr@{}}
\toprule
\textbf{Dataset} & \textbf{GloVe} & \textbf{CNLP} & \textbf{W2V} & \textbf{GCPT} \\ \midrule
    AF & \underline{87.47} & 87.38 & 86.85 & \textbf{89.31} \\
    BB & 85.31 & \underline{85.60} & 83.24 & \textbf{87.18} \\
    KL & \underline{62.32} & 62.07 & 60.53 & \textbf{63.44} \\
    NE & \underline{67.99} & 67.61 & 65.81 & \textbf{68.55} \\
    OT & \underline{87.31} & 87.13 & 85.51 & \textbf{88.86} \\
    QF & 89.52 & 89.69 & \underline{90.49} & \textbf{93.32} \\
    SH & 88.23 & \underline{88.62} & 88.38 & \textbf{89.38} \\
    WE & \underline{92.88} & 92.72 & 92.70 & \textbf{94.59} \\
\bottomrule
\end{tabular}
\caption{Performance comparison of GCPT with unsupervised pre-trained embeddings for disaster datasets in RDF scenario.}
\label{tab:embs_disaster_alltrain_granular}
\end{table}
\begin{table}[!htbp]
\centering
\begin{tabular}{@{}lrrrr@{}}
\toprule
  \textbf{Dataset} & \textbf{GloVe} & \textbf{W2V} & \textbf{GCPT} \\
    \midrule
        books       & \underline{52.69} & 51.73 & \textbf{58.16} \\
        dvd         & \underline{56.22} & 56.16 & \textbf{57.74} \\
        electronics & 54.96 & \underline{55.74} & \textbf{59.21} \\
        kitchen     & \underline{57.24} & 56.99 & \textbf{58.80} \\
        video       & \underline{56.27} & 55.87 & \textbf{58.25} \\
\bottomrule
\end{tabular}
\caption{Performance comparison of GCPT with unsupervised pre-trained embeddings for sentiment datasets in RDF scenario.}
\label{tab:embs_sentiment_alltrain_granular}
\end{table}
\begin{table}[!htbp]
\centering
\begin{tabular}{@{}lrrr@{}}
\toprule
\textbf{Dataset} & \textbf{MLP-BoW} & \textbf{CNN} & \textbf{DPCNN} \\
    \midrule
    books       & 45.92 / 45.26 & 49.75 / 51.93 & 55.87 / 59.16 \\
    dvd         & 49.22 / 48.68 & 50.44 / 50.57 & 60.50 / 61.46 \\
    electronics & 52.77 / 52.89 & 51.46 / 55.73 & 55.99 / 62.56 \\
    kitchen     & 52.49 / 52.43 & 55.43 / 56.75 & 60.25 / 65.58 \\
    video       & 51.92 / 51.87 & 52.14 / 51.89 & 59.19 / 64.24 \\
\bottomrule
\end{tabular}
\caption{GloVe and GCPT performance with various neural models for sentiment datasets in RDF scenario.}
\label{tab:models_sentiment_alltrain_granular}
\end{table}
\begin{table}[!htbp]
\centering
\begin{tabular}{@{}lrrrrr@{}}
\toprule
\textbf{Dataset} & \textbf{Glove} & \textbf{CNLP} & \textbf{W2V} & \textbf{GCPT} \\ \midrule
    AF & 65.96 & \underline{66.64} & 66.42 & \textbf{72.82} \\
    BB & 64.89 & 65.85 & \underline{73.52} & \textbf{76.60} \\
    KL & 52.24 & 53.74 & \underline{55.69} & \textbf{57.19} \\
    NE & 52.43 & \textbf{53.76} & 52.96 & \underline{53.28} \\
    OT & 68.34 & 64.50 & \underline{70.70} & \textbf{80.91} \\
    QF & 67.81 & 73.00 & \underline{76.66} & \textbf{81.84} \\
    SH & 61.15 & 63.70 & \underline{66.56} & \textbf{73.05} \\
    WE & 69.26 & 71.86 & \underline{76.54} & \textbf{85.49} \\
\bottomrule
\end{tabular}
\caption{Performance comparison of GCPT with unsupervised pre-trained embeddings for disaster datasets in Zero-Shot scenario.}
\label{tab:embs_disaster_zeroshot_granular}
\end{table}
\begin{table}[!htbp]
\centering
\begin{tabular}{@{}lrrrr@{}}
\toprule
  \textbf{Dataset} & \textbf{GloVe} & \textbf{W2V} & \textbf{GCPT} \\
    \midrule
    books       & \underline{51.37} & 50.94 & \textbf{54.53} \\
    dvd         & 51.75 & \underline{53.10} & \textbf{54.43} \\
    electronics & 53.27 & \underline{54.96} & \textbf{58.52} \\
    kitchen     & 51.22 & \underline{52.59} & \textbf{54.25} \\
    video       & 53.49 & \underline{54.23} & \textbf{56.51} \\
\bottomrule
\end{tabular}
\caption{Performance comparison of GCPT with unsupervised pre-trained embeddings for sentiment datasets in Zero-Shot scenario.}
\label{tab:embs_sentiment_zeroshot_granular}
\end{table}
\begin{table}[!htbp]
\centering
\begin{tabular}{@{}lrrr@{}}
\toprule
\textbf{Dataset} & \textbf{MLP-BoW} & \textbf{CNN} & \textbf{DPCNN} \\
    \midrule
    books       & 48.17 / 50.43 & 48.90 / 54.26 & 55.58 / 57.89 \\
    dvd         & 50.50 / 48.52 & 51.46 / 53.01 & 52.43 / 55.15 \\
    electronics & 52.29 / 55.18 & 52.50 / 56.52 & 53.15 / 57.68 \\
    kitchen     & 51.95 / 52.43 & 50.61 / 50.82 & 53.72 / 53.81 \\
    video       & 51.58 / 45.77 & 53.77 / 57.65 & 56.81 / 57.72 \\
\bottomrule
\end{tabular}
\caption{W2V and GCPT performance with various neural models for sentiment datasets in Zero-Shot scenario.}
\label{tab:models_sentiment_zeroshot_granular}
\end{table}
\end{document}